\DeclareSymbolFont{bbold}{U}{bbold}{m}{n}
\DeclareSymbolFontAlphabet{\mathbbold}{bbold}
\newtheorem{lemma}{Lemma}
\newtheorem{theorem}{Theorem}
\newtheorem{definition}{Definition}
\newtheorem{corollary}{Corollary}
\title{Convergence of Generalized Belief Propagation Algorithm on Graphs with Motifs}
\author{%
  Yitao Chen \\
  Qualcomm, USA \\
  \texttt{yitaochen@utexas.edu} \\
   \And
  Deepanshu Vasal \\
   Northwestern University \\
   \texttt{dvasal@umich.edu} \\
}
\begin{document}

\maketitle

\begin{abstract}
  Belief propagation is a fundamental message-passing algorithm for numerous applications in machine learning. It is known that belief propagation algorithm is exact on tree graphs. However, belief propagation is run on loopy graphs in most applications. So, understanding the behavior of belief propagation on loopy graphs has been a major topic for researchers in different areas. In this paper, we study the convergence behavior of generalized belief propagation algorithm on graphs with motifs (triangles, loops, etc.) We show under a certain initialization, generalized belief propagation converges to the global optimum of the Bethe free energy for ferromagnetic Ising models on graphs with motifs.
\end{abstract}

\section{Introduction}
Undirected graphical models, also known as Markov Random Fields (MRF), provide a framework for modeling high dimensional distributions with dependent variables. Ising models are a special class of discrete pairwise graphical models originated from statistical physics. Ising models have numerous applications in 
computer vision \cite{ravikumar2010high}, bio-informatics \cite{marbach2012wisdom}, and social networks \cite{eagle2009inferring}. Explicitly, the joint distribution of Ising model is given by
\begin{flalign}
\mathbb{P}(X) = \frac{1}{Z}\exp \bigg ( \beta (\sum_{i} h_iX_i + \sum_{(i,j)} J_{ij}X_iX_j)\bigg),
\end{flalign}
where $\{X_i\}_i \in \{\pm 1\}^n$ are random variables valued in a binary alphabet (also known as "spins"), $J_{ij}$ represents the pairwise interactions between spin $i$ and spin $j$, $h_i$ represents the external field for spin $i$, $\beta=1/T$ is the reciprocal temperature, and $Z$ is a normalization constant called \emph{partition function}.

Historically, Ising models are proposed to study ferromagnetism. However, researchers find the computational complexity is the main challenge of performing sampling and inference on Ising models. In the literature, there are multiple ways to tackle the computational complexity. One of the ways are Markov-Chain Monte Carlo (MCMC) algorithms. A well-known example is Gibbs sampling, which is a special case of the Metropolis–Hastings algorithm. Basically Gibbs sampling samples a random variable conditioned on the distribution based on the previous samples. It can be shown that Gibbs sampling generates a reversible Markov chain of samples. Thus, the stationary distribution of the Markov chain is the desired joint distribution over the random variables, and it can be reached after the \emph{burn-in period}. However, it is also well-known that Gibbs sampling will become trapped on multi-modal distribution. For example, \citet{smith1993bayesian} and \citet{mengersen1996testing} show that when the joint distribution is bi-modal, the Gibbs sampling iterations may be trapped in one of the modes, reducing the probability of convergence.

Another popular way to go around the computational complexity is \emph{variational methods}, which makes some approximation to the joint distribution. These methods usually turn the inference problem with respect to the approximate joint distribution into some non-convex optimization problem, and solve it either by the standard optimization methods, e.g, gradient descent, or by specialized algorithms like belief propagation. However, due to the non-convexity, those methods usually do not have theoretical guarantees that the solution converges to the global optimum. 

Belief propagation (BP) is an effective numerical method for solving inference problems on graphical models. It was originally proposed by \citet{pearl2014probabilistic} for tree-like graphs. Ever since it plays a fundamental role in numerous applications including coding theory \cite{frey1998graphical, richardson2001capacity}, constraint satisfaction problems \cite{achlioptas2006random}, and community detection in the stochastic 
block model \cite{decelle2011asymptotic}. It is well-known that belief propagation is only exact for a model on a graph with locally tree-like structures. The long haunting question is, theoretically how does belief propagation perform on loopy graphs.

We now describe the related work and our contributions.

\textbf{Related work and our contribution}

In a classic work, \citet{yedidia2003understanding} establishes the connection between belief propagation and the Bethe free energy. He shows that there is one-to-one correspondence between the fixed points of belief propagation and stationary points of the Bethe free energy. Following his work, it is known that the Bethe free energy at the critical points can be represented in terms of fixed point messages of belief propagation~\cite{montanari2013statistical}. In a recent work, \citet{koehler2019fast} further studies the properties of Bethe free energy at the critical points, and shows for ferromagnetic Ising models, initialized with all one messages, belief propagation converges to the fixed point corresponds to the global maximum of the Bethe free energy. However, those theories consider either asymptotic locally tree-like graphs, or loopy graphs with simple edges. Real technological, social and biological networks have numerous short and large loops and other complex motifs, which lead to non-tree-like structures and essentially loopy graphs with hyper edges. Newman \cite{newman2009random, karrer2010random} and \citet{miller2009percolation} independently propose a model of random graphs with arbitrary distributions of motifs. And \citet{yoon2011belief} generalizes the Belief Propagation to graphs with motifs.

Our work builds on generalized belief propagation on graphs with motifs \cite{yoon2011belief} and the convergence of belief propagation on ferromagnetic Ising models on loopy graphs with simple edges \cite{koehler2019fast}.
%
In this paper, we show for ferromagnetic Ising models on graphs with motifs, with all messages initialized to one, generalized belief propagation converges to the fixed point corresponds to the global maximum of the Bethe free energy.

\section{Ising Models on Graphs with Motifs}\label{sec:model}
Let us introduce the concept of graphs with motifs. In graphs with motifs, each vertex belongs to a given set of motifs. As shown in Fig.\ref{fig:1(a)} , different motifs can be attached to vertex $i$: a simple edge $(i,j)$, a triangle, a square, a pentagon, and other non-clique motifs. Graphs with motifs can be viewed as hyper-graphs where motifs play a role of hyper-edges. And the number of specific motifs attached to a vertex is equal to hyper-degree with respect to the specific motifs. In this paper, for simplicity, we only consider simple motifs such as simple edges, and cliques.
\begin{figure}[!hbp]
     \centering
     \begin{subfigure}[b]{0.49\textwidth}
         \centering
         \includegraphics[width=\textwidth]{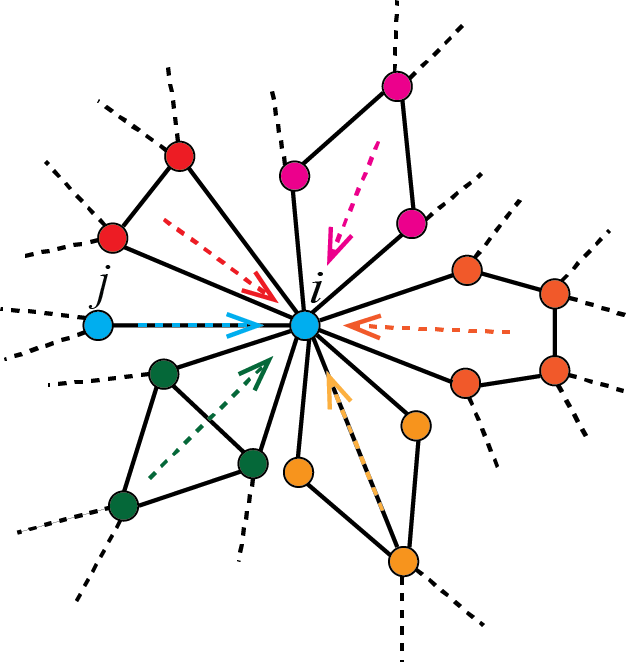}
         \caption{Different motifs attached to vertex $i$}
         \label{fig:1(a)}
     \end{subfigure}
     \hfill
     \begin{subfigure}[b]{0.49\textwidth}
         \centering
         \includegraphics[width=\textwidth]{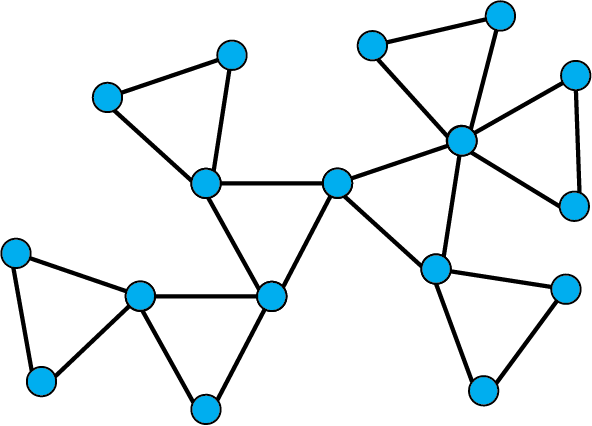}
         \caption{Tree-like hyper-graph with triangle motifs only}
         \label{fig:2(b)}
     \end{subfigure}
     \caption{Examples of hyper-graphs}
     \label{fig:1}
\end{figure}

Consider the Ising model with arbitrary order of interactions among vertices in each motif on a hyper-graph. Let $M_l(i)$ denote a cluster of size $l$ attached to vertex $i$, where vertices $j_1,j_2,\ldots, j_{l-1}$ together with $i$ form the motif. And let $X$ denote the random variable of spin configurations, the Hamiltonian of the model is 
\begin{flalign}
E(X) = - \sum_{i} h_iX_i - \sum_{(i,j)} J_{ij}X_iX_j - \sum_{(i,j,k)} J_{ijk}X_iX_jX_k - \sum_{(i,j,k,l)} J_{ijkl}X_iX_jX_kX_l - \cdots
\end{flalign}
where the first sum corresponds to the external fields at each vertex, the second sum corresponds to the pairwise interactions on simple edges, the third sum corresponds to the higher order interactions among spins in triangles, the fourth sum corresponds to the higher order interactions among spins in squares, and so on. As discussed in the previous section, most previous works focus on Ising models with pairwise interactions. In this paper, we are interested in Ising models with higher order interactions. For simplicity, we consider Ising models with only external fields and higher order interactions in triangles. Our derivation can be extended to more general cases. 

Consider Ising models with only external fields and higher order interactions in triangles, the Hamiltonian of the model is
\begin{flalign}\label{equ:HOIsingModel}
E(X)=-\sum_{i} h_i X_i - \sum_{(i,j,k)} J_{ijk} X_i X_j X_k,
\end{flalign}
where $(i,j,k)$ is a triangle, which can also be denoted as $M_3(i)$, $M_3(j)$, or $M_3(k)$. 

By Boltzmann's law, the joint distribution is defined by 
\begin{flalign} \label{equ:JointDistribution}
P(X) = \frac{1}{Z} e^{-\beta E(X)},
\end{flalign}
where $Z$ is the \emph{partition function}.

Throughout this paper, we focus on \emph{ferromagnetic} Ising models, which is defined below
\begin{definition}
An Ising model is ferromagnetic if $J_{ijk} \ge 0$ for all triangle motifs $(i,j,k)$ and $h_i \ge 0$ for all $i$. 
\end{definition}
We introduce a intermediate message $\mu_{M_3(i)}$ from a motif $M_3(i)$ to spin $i$. 
\begin{flalign}
\mu_{M_3(i)}(X_i) = \frac{e^{\beta \lambda_{M_3(i)}X_i}}{2\cosh{\beta \lambda_{M_3(i)}}}.
\end{flalign}
In the literature, different works have different definitions of messages. $\mu_{M_3(i)}$ is not the message definition we eventually work with in this paper, but it helps to understand the connections between different works. So, abusing the terminology a little bit, we call it `intermediate message'.

By the definition of generalized Belief Propagation, the probability that spin $i$ is in a state $X_i$ is determined by the normalized product of incoming intermediate messages from motifs attached to spin $i$ and the external field factor $e^{\beta h_i X_i}$,
\begin{flalign}
P_i(X_i) = \frac{1}{A} e^{\beta h_i X_i} \prod_{\{M_3(i)\}} \mu_{M_3(i)}(X_i),
\end{flalign}
where $A$ is a normalization constant. And the belief update rule is given by:
\begin{flalign} \label{equ:UpdateRule}
\mu_{M_3(i)}(X_i)=B \sum_{\{X_j = \pm 1\}}e^{-\beta E(M_3(i))} \prod_{j}\prod_{\{M_3(j) \neq M_3(i)\}} \mu_{M_3(j)}(X_j),
\end{flalign}
where $E(M_3(i))$ is an energy of the interaction among spins in the triangle $M_3(i)$, and $B$ is a normalization constant.

Multiplying Equation~{(\ref{equ:UpdateRule})} by $X_i$ and summing over all spin configurations, we obtain an equation for the effective field $\lambda_{M_3(i)}$,
\begin{flalign}\label{equ:7}
\tanh{(\beta \lambda_{M_3(i)})} = \frac{1}{Z(M_3(i))} \sum_{\{X_i, X_{j_1}, \ldots = \pm 1\}} X_i e^{-\beta \tilde{E}(M_3(i))},
\end{flalign}
where 
\begin{flalign}
\tilde{E}(M_3(i)) = -\sum_{n=1}^2 \Lambda_t(j_n)X_{j_n} - J_{ij_1j_2}X_iX_{j_1}X_{j_2},
\end{flalign}
\begin{flalign}
\Lambda_t(j)=h_j+\sum_{\{M_3(j) \neq M_3(i)\}} \lambda_{M_3(j)},
\end{flalign}
\begin{flalign}\label{equ:PartitionFunctionGBP}
Z(M_3(i)) = \sum_{\{X_i, X_{j_1}, \ldots = \pm 1\}} e^{-\beta \tilde{E}(M_3(i))}.
\end{flalign}
For more detailed explanations of Equations~(\ref{equ:UpdateRule}) to (\ref{equ:PartitionFunctionGBP}), please refer to \cite{yoon2011belief}.

Now, define a message from a spin $i$ to motif $M_3(i)$ as $\nu_{i \rightarrow M_3(i)}=\tanh(\lambda_{M_3(i)})$. More specifically, if the motif $M_3(i)$ is a triangle $(i,j,k)$, the message can be alternatively represented as $\nu_{i \rightarrow M_3(i)}=\nu_{i \rightarrow jk} = \tanh(\lambda_{M_3(i)})$. From now on, let the reciprocal temperature $\beta=1$, we can further simplify Equation~{(\ref{equ:7})} as 

\begin{flalign}\label{equ:BP}
\nu_{i \rightarrow jk} =\tanh{\bigg(h_i+\sum_{\{m,n\} \in \partial i \backslash \{j,k\}} \tanh^{-1}(\tanh{( J_{imn})} \nu_{m \rightarrow in} \nu_{n \rightarrow im})\bigg)},
\end{flalign}
where $\partial i$ denotes the motifs attached to spin $i$. Equation~(\ref{equ:BP}) is the consistency equation for fixed point hyper-edge messages $\nu_{i \rightarrow jk}^*$ of the generalized belief propagation. Alternatively, we denote Equation~(\ref{equ:BP}) as $\nu_{i \rightarrow jk} = \phi(\nu)_{i \rightarrow jk}$.

\section{Bethe Free Energy of Higher Order Ising Models}
In order to get the Bethe free energy of our higher order Ising model (\ref{equ:HOIsingModel}), we need to go through the Gibbs variational principle as \citet{yedidia2003understanding} did for standard Ising models with pairwise interactions. Let $P^*$ be a joint distribution defined by our model (\ref{equ:JointDistribution}). If we have some approximate joint distribution $P$, from Gibbs variational principle, we can write Gibbs free energy as
\begin{flalign}
G(P) &= -\sum_{x} P(x) E(x) - \sum_{x} P(x) \log P(x) \\
&= -U(P) + S(P),
\end{flalign}
where $U(P)$ is called the \emph{average energy}, and $S(P)$ is the \emph{entropy}.

We would like to derive a Gibbs free energy that is a function of both the one-node beliefs $P_i(x_i)$ and the three-node beliefs $P_{ijk}(x_i,x_j,x_k)$. The beliefs should satisfy the normalization conditions and the marginalization conditions. In other words, $P$ lies in the following polytope of locally consistent distributions
\begin{flalign}\label{equ:LocallyConsistentPolytope}
\sum_{x_j,x_k}P_{ijk}(x_i,x_j,x_k) &= P_i(x_i) \quad \text{for all triangles $i,j,k$} \nonumber \\
\sum_{x_i} P_i(x_i) &= 1 \quad \text{for all $i$} \\
P_i(x_i), P_{ijk}(x_i,x_j,x_k) &\ge 0 \quad \text{for all triangles $(i,j,k)$,  and all $x_i,x_j,x_k$}\nonumber
\end{flalign}
Because we only consider external fields and higher order interactions with triangles in our model, the one-node and three-node beliefs are actually sufficient to determine the average energy. For our model (\ref{equ:HOIsingModel}) and for any approximate joint probability $P$ such that one-node marginal probabilities are $P_i(x_i)$ and the three-node marginal probabilities are $P_{ijk}(x_i,x_j,x_k)$, the average energy will have the form 
\begin{flalign}\label{equ:AvgEnergy}
U(P)=-\sum_{(i,j,k)}\sum_{x_i,x_j,x_k} P_{ijk}(x_i,x_j,x_k) J_{ijk} x_i x_j x_k - \sum_{i} \sum_{x_i} P_i(x_i) h_i x_i
\end{flalign}
The average energy computed with the true marginal probabilities $P_i^*(x_i)$ and $P_{ijk}^*(x_i,x_j,x_k)$ will also have this form, so if one-node and three-node beliefs are exact, the average energy given by Equation (\ref{equ:AvgEnergy}) will be exact. 

For computing the entropy, we usually need an approximation. We can compute the entropy exactly if we can explicitly express the joint distribution $P(x)$ in terms of the one-node and three-node beliefs. If our graph were tree-like hyper-graph with triangle motifs only (see Fig. \ref{fig:2(b)} as an example), we can in fact do that. In that case, we can represent the joint probability distribution in the form 
\begin{flalign} \label{equ:JDonTree}
P(x) = \frac{\prod_{(i,j,k)}P_{ijk}(x_i,x_j,x_k)}{\prod_{i}P_i(x_i)^{q_i-1}},
\end{flalign}
where $q_i$ is the hyper-degree of node $i$. 

Using Equation~(\ref{equ:JDonTree}), we get the Bethe approximation to the entropy as
\begin{flalign}\label{equ:BetheEntropy}
S_{\text{Bethe}}(P) &= -\sum_{(i,j,k)}\sum_{x_i,x_j,x_k}P_{ijk}(x_i,x_j,x_k) \log P_{ijk}(x_i,x_j,x_k)\nonumber \\
&+ \sum_{i}(q_i-1)\sum_{x_i}P_i(x_i) \log P_i(x_i)
\end{flalign}
Combining Equation~(\ref{equ:AvgEnergy}) and (\ref{equ:BetheEntropy}), we obtain the Bethe free energy
\begin{flalign}
G_{\text{Bethe}}(P) &= -U(P) + S_{\text{Bethe}}(P) \\
& =\sum_{(i,j,k)} J_{ijk}\mathbb{E}_{P_{ijk}}[X_iX_jX_k] + \sum_{i}h_i \mathbb{E}_{P_i}[X_i] \nonumber \\
&+ \sum_{(i,j,k)} H_{P_{ijk}}(X_i,X_j,X_k) - \sum_{i}(q_i-1)H_{P_i}(X_i)
\end{flalign}
Notice when the hyper-graph is a tree, the Bethe free energy $G_{\text{Bethe}}(P)$ will have the correct functional dependence on the beliefs. And solving the optimization problem: maximizing $G_{\text{Bethe}}(P)$ over the polytope of locally consistent distribution (\ref{equ:LocallyConsistentPolytope}) will give the true marginals. For loopy hyper-graphs, the Bethe free energy $G_{\text{Bethe}}(P)$ is only an approximation, which is the essence of the variational methods.

We can derive the BP equations from the first-order optimality conditions for the aforementioned optimization problem. In other words, we can verify that \emph{a set of beliefs gives a BP fixed point in any hyper-graph if and only if they are stationary points of the Bethe free energy} for the generalized BP. To see this, we need to add Lagrange multipliers to $G_{\text{Bethe}}(P)$ to form a Lagrangian $L$. Let $\lambda_{i\rightarrow jk}(x_i)$ be a multiplier that enforces the marginalization constraint $\sum_{x_j,x_k}P_{ijk}(x_i,x_j,x_k) = P_i(x_i)$, and $\lambda_i$ be a multiplier that enforces the normalization of $P_i(x_i)$. So, the largrangian corresponding to the optimization problem is 
\begin{flalign}
\mathcal{L}(P,\lambda) &= G_{\text{Bethe}}(P) + \sum_{(i,j,k),x_i} \lambda_{i\rightarrow jk}(x_i) (\sum_{x_j,x_k}P_{ijk}(x_i,x_j,x_k) - P_i(x_i)) \nonumber \\
&+\sum_{i} \lambda_i (\sum_{x_i}P_i(x_i) - 1)
\end{flalign}
where we ignore the constraints $P_i(x_i), P_{ijk}(x_i,x_j,x_k) \ge 0$ because, given other constraints, those constraints are always satisfied at a critical point.

The equation $\frac{\partial L}{\partial P_{ijk}(x_i,x_j,x_k)}=0$ gives:
\begin{flalign}
\log P_{ijk}(x_i,x_j,x_k) = J_{ijk}x_ix_jx_k + \lambda_{i \rightarrow jk}(x_i) + \lambda_{j \rightarrow ik}(x_j) + \lambda_{k \rightarrow ij}(x_k) - 1
\end{flalign}
Setting $\lambda_{i \rightarrow jk}' = \frac{\lambda_{i \rightarrow jk}(1) - \lambda_{i \rightarrow jk}(-1)}{2}$, we find that at a critical point of the Lagrangian that 
\begin{flalign}
P_{ijk}(x_i,x_j,x_k) &\propto \exp\bigg(J_{ijk}x_ix_jx_k + \lambda_{i \rightarrow jk}(x_i) + \lambda_{j \rightarrow ik}(x_j) + \lambda_{k \rightarrow ij}(x_k)\bigg) \\
&\propto \exp\bigg(J_{ijk}x_ix_jx_k + \lambda_{i \rightarrow jk}'x_i + \lambda_{j \rightarrow ik}'x_j + \lambda_{k \rightarrow ij}'x_k\bigg)
\end{flalign}
And the equation $\frac{\partial L}{\partial P_i(x_i)}=0$ gives:
\begin{flalign}
(q_i-1)(1+\log P_i(x_i)) = \sum_{\{j,k\} \in \partial i} \lambda_{i \rightarrow jk}(x_i) - h_i x_i - \lambda_i 
\end{flalign}
Setting $\lambda_{i \rightarrow jk}' = \frac{\lambda_{i \rightarrow jk}(1) - \lambda_{i \rightarrow jk}(-1)}{2}$, we find that at a critical point of the Lagrangian that
\begin{flalign}
P_i(x_i) &\propto \exp\bigg( \frac{1}{q_i -1}\sum_{\{j,k\} \in \partial i} \lambda_{i \rightarrow jk}(x_i) - \frac{h_i}{q_i -1}x_i \bigg) \\
 &\propto \exp\bigg( \frac{1}{q_i -1}\sum_{\{j,k\} \in \partial i} \lambda_{i \rightarrow jk}'x_i - \frac{h_i}{q_i -1}x_i \bigg)
\end{flalign}

Furthermore, by differentiating with respect to $\lambda$, we see that the marginalization constraints are satisfied. Therefore, for any triangle $(i,j,k)$, $\sum_{x_j,x_k}P_{ijk}(x_i,x_j,x_k) = P_i(x_i)$. Hence,
\begin{flalign}
P_i(x_i)^{q_i -1} &\propto \prod_{\{m,n\} \in \partial i \backslash \{j,k\}} \sum_{x_m,x_n} P_{imn}(x_i,x_m,x_n) \\
&\propto \sum_{x_{\partial i\backslash \{j,k\}}} \exp\bigg(\sum_{\{m,n\}}(J_{imn}x_ix_mx_n + \lambda_{i \rightarrow mn}'x_i + \lambda_{m \rightarrow in}'x_m + \lambda_{n \rightarrow im}'x_n)\bigg) \\
&= \exp(\sum_{\{m,n\}}\lambda_{i \rightarrow mn}'x_i) \sum_{x_{\partial i\backslash \{j,k\}}} \exp\bigg(\sum_{\{m,n\}}(J_{imn}x_ix_mx_n + \lambda_{m \rightarrow in}'x_m + \lambda_{n \rightarrow im}'x_n)\bigg)
\end{flalign}
So
\begin{flalign}
\exp(\lambda_{i \rightarrow jk}'x_i - h_i x_i) &\propto \sum_{x_{\partial i\backslash \{j,k\}}} \exp\bigg(\sum_{\{m,n\}}(J_{imn}x_ix_mx_n + \lambda_{m \rightarrow in}'x_m + \lambda_{n \rightarrow im}'x_n)\bigg) \\
&\propto \prod_{\{m,n\} \in \partial i \backslash \{j,k\}} \sum_{x_m, x_n} \exp\bigg(J_{imn}x_ix_mx_n + \lambda_{m \rightarrow in}'x_m + \lambda_{n \rightarrow im}'x_n\bigg)
\end{flalign}
Define $\nu_{i \rightarrow jk}:= \tanh(\lambda_{i \rightarrow jk}')$, we have
\begin{flalign}
\frac{1+\nu_{i \rightarrow jk} x_i}{2} = \frac{e^{\lambda_{i \rightarrow jk}'x_i}}{e^{\lambda_{i \rightarrow jk}'} + e^{-\lambda_{i \rightarrow jk}'}}
\end{flalign}
Let
\begin{flalign}
f(x_i) = e^{h_ix_i}\prod_{\{m,n\}}\sum_{x_m,x_n}e^{J_{imn}x_ix_mx_n}e^{ \lambda_{m \rightarrow in}'x_m+\lambda_{n \rightarrow im}'x_n}
\end{flalign}
Then we see
\begin{flalign}
\nu_{i \rightarrow jk} 
&= \frac{f(1)-f(-1)}{f(1)+f(-1)} \\
&=  \tanh{\bigg(h_i+\sum_{\{m,n\} \in \partial i \backslash \{j,k\}} \tanh^{-1}(\tanh{( J_{imn})} \nu_{m \rightarrow in} \nu_{n \rightarrow im})\bigg)}
\end{flalign}
which is the BP consistency equation (\ref{equ:BP}) we derived in Section~\ref{sec:model}. 

Till this point, we represent the Bethe free energy in terms of beliefs corresponds to BP fixed points. In order to analyze the behavior of the Bethe free energy at BP fixed points, we need to represent the Bethe free energy in terms of the hyper-edge messages $\nu_{i \rightarrow jk}$, which is called \emph{dual Bethe free energy} in the literature. First, we have the following lemma.

\begin{lemma}
The dual Bethe free energy at a critical point can be defined by 
\begin{flalign}
G^*_{\text{Bethe}}(\lambda) = \sum_{i} F_i(\lambda) - \sum_{(i,j,k)}F_{ijk}(\lambda),
\end{flalign}
where
\begin{flalign}
F_i(\lambda) &= \log \sum_{x_i} e^{h_ix_i} \prod_{\{m,n\} \in \partial i } \sum_{x_m, x_n} e^{J_{imn}x_ix_mx_n} e^{ \lambda_{m \rightarrow in}'x_m + \lambda_{n \rightarrow im}'x_n} \\
F_{ijk}(\lambda) &=  \log \sum_{x_i,x_j,x_k} e^{J_{ijk}x_ix_jx_k + \lambda_{i \rightarrow jk}'x_i + \lambda_{j \rightarrow ik}'x_j + \lambda_{k \rightarrow ij}'x_k}
\end{flalign}
\end{lemma}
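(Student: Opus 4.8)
The plan is to obtain the stated expression by evaluating the Bethe free energy $G_{\text{Bethe}}(P)$ at the stationary beliefs $P=P(\lambda)$ produced by the Lagrangian analysis above, and rewriting the result purely in terms of the effective fields $\lambda'$. At a stationary point every marginalization constraint $\sum_{x_j,x_k}P_{ijk}(x_i,x_j,x_k)=P_i(x_i)$ and every normalization constraint $\sum_{x_i}P_i(x_i)=1$ holds, so the constraint terms of the Lagrangian $\mathcal{L}(P,\lambda)$ vanish and $G^*_{\text{Bethe}}(\lambda):=\mathcal{L}(P(\lambda),\lambda)=G_{\text{Bethe}}(P(\lambda))$; it therefore suffices to substitute the explicit exponential forms of $P_{ijk}$ and $P_i$ derived above into $G_{\text{Bethe}}$ and simplify.

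Concretely, I would first record, from the stationarity conditions, that $P_{ijk}(x_i,x_j,x_k)=Z_{ijk}^{-1}\exp(J_{ijk}x_ix_jx_k+\lambda'_{i\to jk}x_i+\lambda'_{j\to ik}x_j+\lambda'_{k\to ij}x_k)$ with $\log Z_{ijk}=F_{ijk}(\lambda)$, and $P_i(x_i)=Z_i^{-1}\exp(a_ix_i)$ with $a_i=\frac{1}{q_i-1}(\sum_{\{j,k\}\in\partial i}\lambda'_{i\to jk}-h_i)$. Plugging $\log P_{ijk}$ and $\log P_i$ into the entropies, the triangle contribution $J_{ijk}\mathbb{E}_{P_{ijk}}[X_iX_jX_k]+H_{P_{ijk}}(X_i,X_j,X_k)$ collapses to $F_{ijk}(\lambda)-\lambda'_{i\to jk}\mathbb{E}_{P_{ijk}}[X_i]-\lambda'_{j\to ik}\mathbb{E}_{P_{ijk}}[X_j]-\lambda'_{k\to ij}\mathbb{E}_{P_{ijk}}[X_k]$, while the node contribution $h_i\mathbb{E}_{P_i}[X_i]-(q_i-1)H_{P_i}(X_i)$ collapses to $(\sum_{\{j,k\}\in\partial i}\lambda'_{i\to jk})\mathbb{E}_{P_i}[X_i]-(q_i-1)\log Z_i$. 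Using the marginalization identity $\mathbb{E}_{P_{ijk}}[X_i]=\mathbb{E}_{P_i}[X_i]$, the terms linear in $\lambda'$ regroup by vertex and cancel between the triangle sum and the node sum, leaving $G_{\text{Bethe}}(P(\lambda))=\sum_{(i,j,k)}F_{ijk}(\lambda)-\sum_i(q_i-1)\log Z_i$.

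The remaining step is to absorb the node normalizations $-(q_i-1)\log Z_i$ into the node free energies $F_i(\lambda)$. Writing $h_{imn}(x_i):=\sum_{x_m,x_n}\exp(J_{imn}x_ix_mx_n+\lambda'_{m\to in}x_m+\lambda'_{n\to im}x_n)$, the marginalization condition $P_i(x_i)=\sum_{x_m,x_n}P_{imn}(x_i,x_m,x_n)$ together with the exponential form of $P_{imn}$ gives $h_{imn}(x_i)=Z_{imn}P_i(x_i)\exp(-\lambda'_{i\to mn}x_i)$; multiplying over all triangles $\{m,n\}\in\partial i$ and using the exponential form of $P_i$, the summand $e^{h_ix_i}\prod_{\{m,n\}\in\partial i}h_{imn}(x_i)$ inside $F_i$ telescopes to $(\prod_{\{m,n\}\in\partial i}Z_{imn})\,Z_i^{-(q_i-1)}P_i(x_i)$, whence $F_i(\lambda)=\sum_{\{m,n\}\in\partial i}F_{imn}(\lambda)-(q_i-1)\log Z_i$. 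Substituting this back into the previous display and collecting terms — keeping track of the multiplicity with which each triangle enters the node sums — produces the claimed formula for $G^*_{\text{Bethe}}(\lambda)$. (As a cross-check, the same consistency identity $\exp((\lambda'_{i\to jk}-h_i)x_i)\propto\prod_{\{m,n\}\in\partial i\setminus\{j,k\}}h_{imn}(x_i)$ already appeared when deriving the BP equation above.)

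The step I expect to be most delicate is exactly this last bookkeeping: each triangle $(i,j,k)$ feeds into the three node quantities $F_i,F_j,F_k$, so the sums $\sum_iF_i$, $\sum_i(q_i-1)\log Z_i$, and $\sum_{(i,j,k)}F_{ijk}$ must be combined with precisely the right coefficients, and it is worth verifying the arithmetic against the pairwise case, where every edge is shared by only two nodes. A secondary point needing a sentence of justification is why passing from $\mathcal{L}$ to $G_{\text{Bethe}}(P(\lambda))$ is legitimate: one must note that the exponential node and triangle beliefs are mutually consistent — which is precisely the fixed-point derivation recalled just above — and that the nonnegativity constraints dropped from $\mathcal{L}$ are automatically met by these beliefs.
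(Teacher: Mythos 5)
Your overall strategy is the same as the paper's---evaluate $G_{\text{Bethe}}$ at the critical-point beliefs and use their exponential form to turn each local ``expectation plus entropy'' term into a log-partition function---but you carry it out more explicitly than the paper, which asserts the decomposition ``by rearranging terms'' and then only verifies the closed form of $F_{ijk}$. Your two intermediate identities are correct: at a critical point $G_{\text{Bethe}}(P(\lambda))=\sum_{(i,j,k)}F_{ijk}(\lambda)-\sum_i(q_i-1)\log Z_i$, and $F_i(\lambda)=\sum_{\{m,n\}\in\partial i}F_{imn}(\lambda)-(q_i-1)\log Z_i$.

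The genuine gap is exactly the bookkeeping step you flag as delicate and then wave through. Summing your identity for $F_i$ over $i$ counts each triangle three times (once for each of its vertices), so $\sum_i F_i=3\sum_{(i,j,k)}F_{ijk}-\sum_i(q_i-1)\log Z_i$, and therefore $\sum_i F_i-\sum_{(i,j,k)}F_{ijk}=G_{\text{Bethe}}(P(\lambda))+\sum_{(i,j,k)}F_{ijk}$, which agrees with the claimed formula only if $\sum_{(i,j,k)}F_{ijk}=0$. What your computation actually establishes is $G_{\text{Bethe}}(P(\lambda))=\sum_i F_i(\lambda)-2\sum_{(i,j,k)}F_{ijk}(\lambda)$. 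The pairwise sanity check you propose is precisely the right one and explains the mismatch: an edge is shared by two nodes, so the subtraction coefficient is $2-1=1$ there, whereas a triangle is shared by three nodes and needs $3-1=2$. So the proof does not close as written; carried to completion, it shows the lemma's stated coefficient cannot be right for a sum over unordered triangles (and the paper's own ``rearranging terms,'' whose node terms likewise triple-count each triangle's energy and entropy, has the same problem). You should finish the arithmetic and state the corrected identity, or else identify a different normalization of $\sum_{(i,j,k)}$ under which the claim holds.
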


\begin{proof}
Recall the Bethe free energy
\begin{flalign}
G_{\text{Bethe}}(P) &= -U(P) + S_{\text{Bethe}}(P) \\
& =\sum_{(i,j,k)} J_{ijk}\mathbb{E}_{P_{ijk}}[X_iX_jX_k] + \sum_{i}h_i \mathbb{E}_{P_i}[X_i] \\
&+ \sum_{(i,j,k)} H_{P_{ijk}}(X_i,X_j,X_k) - \sum_{i}(q_i-1)H_{P_i}(X_i)
\end{flalign}
By rearranging terms, we have
\begin{flalign}
G_{\text{Bethe}}(P) = G^*_{\text{Bethe}}(\lambda) = \sum_{i} F_i(\lambda) - \sum_{(i,j,k)}F_{ijk}(\lambda),
\end{flalign}
where 
\begin{flalign}
F_i(\lambda) &= \mathbb{E}[h_iX_i + \sum_{\{m,n\}}(J_{imn}X_iX_mX_n + \lambda_{m \rightarrow in}'X_m + \lambda_{n \rightarrow im}'X_n)] \\
&+ \sum_{\{m,n\} \in \partial i} H(X_i,X_m,X_n) - (q_i-1)H(X_i)
\end{flalign}
and 
\begin{flalign}
F_{ijk}(\lambda) = \mathbb{E}[J_{ijk}X_iX_jX_k + \lambda_{i \rightarrow jk}'X_i + \lambda_{j \rightarrow ik}'X_j + \lambda_{k \rightarrow ij}'X_k] + H(X_i,X_j,X_k)
\end{flalign}
W.l.o.g., let us look at the term $F_{ijk}(\lambda)$, let $f(X)=J_{ijk}X_iX_jX_k + \lambda_{i \rightarrow jk}'X_i + \lambda_{j \rightarrow ik}'X_j + \lambda_{k \rightarrow ij}'X_k$, it can be rewritten as 
\begin{flalign}\label{equ:Fijk}
F_{ijk}(\lambda) &= \mathbb{E}[f(X)] - \mathbb{E}[\log P_{ijk}(X_i,X_j,X_k)] \\
&=\mathbb{E}[\log \frac{e^{f(X)}}{P_{ijk}(X_i,X_j,X_k)}]
\end{flalign}
From Equation (\ref{equ:JointDistribution}), we know 
\begin{flalign}
P_{ijk}(X_i,X_j,X_k) = \frac{1}{Z_{ijk}} e^{f(X)},
\end{flalign}
where $Z_{ijk}$ is a normalization constant $Z_{ijk} = \sum_{x_i,x_j,x_k} e^{f(X)}$. Substitute it back into Equation (\ref{equ:Fijk}), we have
\begin{flalign}
F_{ijk}(\lambda) &= \mathbb{E}[\log(Z_{ijk})] = \log(Z_{ijk})= \log(\sum_{x_i,x_j,x_k} e^{f(X)})\\
&= \log \sum_{x_i,x_j,x_k} e^{J_{ijk}x_ix_jx_k + \lambda_{i \rightarrow jk}'x_i + \lambda_{j \rightarrow ik}'x_j + \lambda_{k \rightarrow ij}'x_k}
\end{flalign}
\end{proof}
If we use the definition $\nu_{i \rightarrow jk}:= \tanh(\lambda_{i \rightarrow jk}')$, and define $\theta_{ijk}=\tanh(J_{ijk})$, we have the following corollary:
\begin{corollary}
The dual Bethe free energy in terms of hyper-edge messages is  
\begin{flalign}
G^*_{\text{Bethe}}(\nu) = \sum_{i} F_i(\nu) - \sum_{(i,j,k)}F_{ijk}(\nu),
\end{flalign}
where
\begin{flalign}
F_i(\nu) &= \log \bigg[ e^{h_i}\prod_{m,n}(1+\theta_{imn}v_{m \rightarrow in}\nu_{n \rightarrow im}) - e^{-h_i}\prod_{m,n}(1-\theta_{imn}v_{m \rightarrow in}\nu_{n \rightarrow im})\bigg] \\
F_{ijk}(\lambda) &=  \log \bigg(1+\theta_{ijk}\nu_{i\rightarrow jk}\nu_{j\rightarrow ik}\nu_{k\rightarrow ij}\bigg)
\end{flalign}
\end{corollary}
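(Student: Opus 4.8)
The plan is to deduce the corollary from the Lemma purely by the change of variables $\nu_{i\rightarrow jk} = \tanh(\lambda'_{i\rightarrow jk})$ and $\theta_{ijk} = \tanh(J_{ijk})$, after evaluating the spin sums in the Lemma's formulas in closed form. The single identity that does all the work is the sign-linearization
\[
e^{cx} = \cosh(c)\big(1 + \tanh(c)\,x\big), \qquad x \in \{-1,+1\},
\]
valid because $x$ is a sign, used together with $x^2 = 1$ and $\sum_{x = \pm 1} x = 0$, $\sum_{x=\pm 1} 1 = 2$. I would apply this to every exponential factor appearing in $F_i(\lambda)$ and $F_{ijk}(\lambda)$: to $e^{J_{ijk}x_ix_jx_k}$ it produces $\cosh(J_{ijk})(1+\theta_{ijk}x_ix_jx_k)$, and to each $e^{\lambda'_{a\rightarrow bc}x_a}$ it produces $\cosh(\lambda'_{a\rightarrow bc})(1+\nu_{a\rightarrow bc}x_a)$.

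For $F_{ijk}$, after this substitution the summand is a constant (namely $\cosh(J_{ijk})$ times the three $\cosh(\lambda')$ factors) times the product of the four linear factors $(1+\theta_{ijk}x_ix_jx_k)$, $(1+\nu_{i\rightarrow jk}x_i)$, $(1+\nu_{j\rightarrow ik}x_j)$, $(1+\nu_{k\rightarrow ij}x_k)$. Expanding this product, reducing every $x_a^2$ to $1$, and summing over $x_i,x_j,x_k \in \{\pm 1\}$ annihilates every monomial in which some $x_a$ survives to odd power; the only two surviving contributions are $1$ and $\theta_{ijk}\nu_{i\rightarrow jk}\nu_{j\rightarrow ik}\nu_{k\rightarrow ij}$, the latter obtained by pairing the triangle factor $\theta_{ijk}x_ix_jx_k$ against the three message factors. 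This yields $F_{ijk}(\lambda) = \log\!\big[(\text{const})\,(1+\theta_{ijk}\nu_{i\rightarrow jk}\nu_{j\rightarrow ik}\nu_{k\rightarrow ij})\big]$, i.e. the stated $F_{ijk}(\nu)$. For $F_i$, in the tree-like hyper-graph the motifs attached to $i$ overlap only at the vertex $i$, so the inner sums over the pairs $(x_m,x_n)$ decouple; carrying out each one exactly as above collapses it to $(\text{const})\,(1+\theta_{imn}\nu_{m\rightarrow in}\nu_{n\rightarrow im}\,x_i)$. Taking the product over $\{m,n\} \in \partial i$, restoring the factor $e^{h_ix_i}$, and summing over $x_i = \pm 1$ then produces (up to a multiplicative constant) precisely the bracketed expression defining $F_i(\nu)$, since the $x_i=1$ and $x_i=-1$ terms contribute $e^{h_i}\prod_{m,n}(1+\theta_{imn}\nu_{m\rightarrow in}\nu_{n\rightarrow im})$ and $e^{-h_i}\prod_{m,n}(1-\theta_{imn}\nu_{m\rightarrow in}\nu_{n\rightarrow im})$ respectively.

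I expect the one genuinely delicate step to be the accounting of the multiplicative normalization constants generated by the linearization — the $\cosh(J_{ijk})$, the $\cosh(\lambda') = (1-\nu^2)^{-1/2}$, and the powers of $2$ — and verifying how they combine in $G^*_{\text{Bethe}}(\nu) = \sum_i F_i(\nu) - \sum_{(i,j,k)}F_{ijk}(\nu)$ (either cancelling against one another or being absorbed into the normalization of the messages, as in the definition of $\mu_{M_3(i)}$). The bookkeeping here is governed by the incidence pattern: each message label $\lambda'_{a\rightarrow bc}$ occurs inside exactly the two node terms $F_b, F_c$ and the single triangle term $F_{abc}$, and each $\theta_{ijk}$ occurs inside $F_i, F_j, F_k$ and $F_{ijk}$, so one tallies these multiplicities with their signs. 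Everything else — the identity above, the expansion of the products, and the parity argument killing the odd-power monomials in the spin sums — is mechanical, so I anticipate no conceptual obstacle.
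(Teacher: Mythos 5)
The paper states this corollary with no proof at all, treating it as an immediate substitution into the preceding lemma, so your route --- linearizing each exponential via $e^{cx}=\cosh(c)\,(1+\tanh(c)\,x)$ for $x\in\{\pm1\}$ and killing odd monomials by parity in the spin sums --- is the natural (and essentially the only) way to supply one, and your evaluation of the inner sums is correct. One minor discrepancy you should not paper over: summing your two contributions over $x_i=\pm1$ gives
\[
F_i=\log\Big[e^{h_i}\prod_{m,n}(1+\theta_{imn}\nu_{m\to in}\nu_{n\to im})\;+\;e^{-h_i}\prod_{m,n}(1-\theta_{imn}\nu_{m\to in}\nu_{n\to im})\Big],
\]
with a plus sign, not the minus printed in the corollary; your computation is the correct one, and the minus appears to be a typo that propagates into the denominator of Equation~(\ref{equ:Gradient}).

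The genuine gap is precisely the step you defer. You correctly record the incidence pattern --- each message $\lambda'_{a\to bc}$ occurs in exactly the two node terms $F_b,F_c$ and the single triangle term $F_{abc}$ --- but you never carry out the tally, and it does not close the way you anticipate. Each occurrence produces a factor $\cosh(\lambda'_{a\to bc})=(1-\nu_{a\to bc}^2)^{-1/2}$, so in $\sum_iF_i-\sum_{(i,j,k)}F_{ijk}$ the net multiplicity per message is $2-1=1$, leaving the residual
\[
\sum_{(i,j,k)}\log\big[\cosh(\lambda'_{i\to jk})\cosh(\lambda'_{j\to ik})\cosh(\lambda'_{k\to ij})\big]
=-\tfrac12\sum_{(i,j,k)}\log\big[(1-\nu_{i\to jk}^2)(1-\nu_{j\to ik}^2)(1-\nu_{k\to ij}^2)\big],
\]
which is $\nu$-dependent and therefore neither a harmless additive constant nor something that can be ``absorbed into the normalization'' (only the $\cosh(J_{ijk})$ factors and the powers of $2$ are genuine constants). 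The cancellation you are hoping for does occur in the pairwise-edge setting of \citet{koehler2019fast}, where each message lies in exactly one node term and one edge term, and that is exactly the feature that breaks for triangle motifs. So your proof, completed honestly, establishes the corollary only up to this correction term; you would need either to add it to the statement or to renormalize $F_i$ and $F_{ijk}$, and the paper, having omitted the proof, offers no resolution of this point.
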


\section{Optimization Landscape}
Now, we can study the behavior of the Bethe free energy at critical points. The following lemma establishes that $\phi(\nu)_{i \rightarrow jk}$ is a concave monotone function for some non-negative $\nu$.
\begin{lemma}\label{lem:BPMono}
Suppose that $f(x) = \tanh(h + \sum_{(i,j)} \tanh^{-1}(x_ix_j))$ for any $h \ge 0$. Then $f$ is a concave monotone function on the domain $[x^*,1)^n$.
\end{lemma}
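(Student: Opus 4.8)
The plan is to treat monotonicity and concavity separately: monotonicity is immediate from the composition structure of $f$, while concavity reduces to a Hessian estimate whose proof rests on the lower bound $x^*$ of the domain. \textbf{Monotonicity.} On $[x^*,1)^n$ all coordinates are nonnegative and every product $x_ix_j$ lies in $[0,1)$, so each summand $\tanh^{-1}(x_ix_j)$ is finite and nondecreasing in each of $x_i,x_j$ (since $\tanh^{-1}$ is increasing on $[0,1)$). Hence $g(x):=h+\sum_{(i,j)}\tanh^{-1}(x_ix_j)$ is finite and coordinatewise nondecreasing, and composing with the increasing map $\tanh$ shows $f=\tanh\circ g$ is coordinatewise nondecreasing with values in $(0,1)$; in particular $f$ is well defined on the domain.

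\textbf{Reduction of concavity to a Hessian inequality.} Writing $f=\tanh\circ g$, the chain rule gives $\nabla^2 f=\tanh''(g)\,\nabla g\,\nabla g^{\top}+\tanh'(g)\,\nabla^2 g$, and since $\tanh''=-2\tanh\cdot\tanh'$ we obtain, for any vector $v$, $v^{\top}\nabla^2 f\,v=\tanh'(g)\big(v^{\top}\nabla^2 g\,v-2\tanh(g)\,(v^{\top}\nabla g)^2\big)$. As $\tanh'(g)>0$, concavity of $f$ on the domain is equivalent to $v^{\top}\nabla^2 g(x)\,v\le 2\tanh(g(x))\,(v^{\top}\nabla g(x))^2$ for all $x\in[x^*,1)^n$ and all $v$. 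For a single pair term $u_{ij}:=\tanh^{-1}(x_ix_j)$, a direct computation of the $2\times2$ Hessian in $(x_i,x_j)$ yields the identity $v^{\top}\nabla^2 u_{ij}\,v=2x_ix_j\,(v^{\top}\nabla u_{ij})^2+\tfrac{2v_iv_j}{1-(x_ix_j)^2}$, where $v^{\top}\nabla u_{ij}=\tfrac{x_jv_i+x_iv_j}{1-(x_ix_j)^2}$. Summing over pairs and setting $p_{ij}:=x_ix_j$, $w_{ij}:=v^{\top}\nabla u_{ij}$, the inequality to be established becomes $\sum_{(i,j)}\big(2p_{ij}w_{ij}^2+\tfrac{2v_iv_j}{1-p_{ij}^2}\big)\le 2\tanh(g)\big(\sum_{(i,j)}w_{ij}\big)^2$.

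\textbf{Closing the estimate.} By AM--GM, when $v_iv_j\ge0$ one has $(x_jv_i+x_iv_j)^2\ge 4p_{ij}v_iv_j$, hence $\tfrac{2v_iv_j}{1-p_{ij}^2}\le\tfrac{1-p_{ij}^2}{2p_{ij}}w_{ij}^2$ (and the left side is negative, hence harmless, when $v_iv_j<0$); so the left-hand side above is at most $\sum_{(i,j)}\tfrac{3p_{ij}^2+1}{2p_{ij}}w_{ij}^2$. Since each summand of $g$ is nonnegative and $h\ge0$ we have $g\ge\tanh^{-1}(p_{ij})$, so $\tanh(g)\ge p_{ij}\ge x^{*2}$; combining this with the scalar bound $\tfrac{3p^2+1}{4p}\le\tanh(g)$, valid for $p$ bounded away from $0$ and close to $1$ — this is exactly where the restriction to $[x^*,1)^n$ enters, and it forces $x^*>1/\sqrt{3}$ together with a further condition linking $x^*$ to $h$ — and with $\big(\sum_{(i,j)}w_{ij}\big)^2\ge\sum_{(i,j)}w_{ij}^2$ in the relevant (monotone) directions, the desired inequality follows.

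\textbf{Main obstacle.} The crux is this last step: verifying the scalar inequality $\tanh(g)\ge\tfrac{3p_{ij}^2+1}{4p_{ij}}$ uniformly on the domain — near $p_{ij}=1$ both sides tend to $1$, so one must compare rates, using that the remaining terms of $g$ (and $h$) push $\tanh(g)$ toward $1$ fast enough — and, for directions $v$ with mixed signs, showing that the cross terms in $\big(\sum_{(i,j)}w_{ij}\big)^2$ do not destroy the bound. I expect this to pin down the precise admissible value of $x^*$ as a function of $h$ and the hyper-degrees, and, for mixed-sign $v$, to require a refinement of the AM--GM step above; in the intended convergence application it may in fact suffice to prove concavity only along the nonnegative directions traversed by the belief-propagation iterates, which is the easier case handled directly by the computation above.
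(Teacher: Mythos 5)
Your setup (monotonicity via the composition structure, the chain-rule reduction $v^{\top}\nabla^2 f\,v=\tanh'(g)\bigl(v^{\top}\nabla^2 g\,v-2\tanh(g)(v^{\top}\nabla g)^2\bigr)$, and the per-pair Hessian identity, which I checked and is correct) is a clean and essentially equivalent reorganization of the computation the paper performs. But the proof does not close, and you say so yourself: the entire argument funnels into the scalar inequality $\tanh(g)\ge\frac{3p_{ij}^2+1}{4p_{ij}}$, which you assert "for $p$ bounded away from $0$ and close to $1$" but never prove — and it is in fact false under the lemma's hypotheses. As $p\to 1^-$ the left side behaves like $1-\frac{1}{2}(1-p)$, while $1-\tanh\bigl(h+\tanh^{-1}(p)\bigr)=\frac{(1-p)(1-\tanh h)}{1+p\tanh h}$, so the inequality at $p\to1$ forces $e^{-2h}\le\frac12$, i.e.\ $h\ge\frac{1}{2}\ln 2$ (or an equivalent contribution from other pair terms). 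The hypothesis $h\ge 0$ gives no such bound; in the degenerate case of a single pair with $h=0$ one has $f(x)=x_1x_2$, whose Hessian is indefinite, so no choice of $x^*<1$ rescues the estimate. This is a genuine gap, not a deferred technicality, and your AM--GM step is not the culprit — redoing the estimate without it still requires $\tanh h\gtrsim\frac{1}{3}$. The mixed-sign directions $v$ are likewise left open.

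For comparison, the paper does not attempt a closed-form threshold at all: it restricts to \emph{nonnegative} test vectors $w$ (so it only establishes negativity of the quadratic form along nonnegative directions, which is all the convergence theorem uses), rescales to $w'$, and reduces to the one-variable function $g(x)=\frac1x+x-2f\bigl(1+C(1-\frac{x}{f})\bigr)$, defining $x^*$ \emph{implicitly} as the largest root of $g$ on $(0,1)$; the needed sign $g(1)<0$ comes from the strict inequality $f(x)>x$, i.e.\ precisely from the extra contribution of $h$ and the remaining pairs that your explicit bound fails to exploit. Your closing remark — that concavity along nonnegative directions suffices for the application — is exactly right and is the route the paper takes, but your argument does not complete even that restricted version, since the unproven scalar inequality is still required there. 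To repair your proof you would need to replace the uniform bound $\tanh(g)\ge\frac{3p^2+1}{4p}$ by an implicit definition of $x^*$ (as the paper does) or by an explicit threshold depending on $h$ and the hyper-degrees, and to restrict the claim to nonnegative directions.
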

\begin{proof}
Observe that 
\begin{flalign}
\frac{\partial f}{\partial x_i}(x) = \frac{1-f(x)^2}{1-(x_jx_i)^2}x_j \ge 0,
\end{flalign}
which proves monotonicity, and 
\begin{flalign}
\frac{\partial^2 f}{\partial x_ix_k}(x) 
&= \frac{1-f(x)^2}{(1-(x_jx_i)^2)(1-(x_lx_k)^2)} \bigg[ -2f(x)x_jx_l +\mathbbold{1}(k=j,l=i)(1+(x_ix_j)^2) \nonumber \\
&+ \mathbbold{1}(k=i,l=j)2(x_ix_j)^2 \bigg].
\end{flalign}
Note that for any non-negative vector $w$, if we let 
\begin{flalign}
w_i'=\frac{\sqrt{1-f(x)^2}}{1-(x_jx_i)^2}x_jw_i, \quad w_k'=\frac{\sqrt{1-f(x)^2}}{1-(x_kx_l)^2}x_lw_k
\end{flalign}
Then we have,
\begin{flalign}
&\sum_{i,k}w_i \frac{\partial^2 f}{\partial x_ix_k} w_k \\
=& \sum_{i,k}w_i'  \bigg[ -2f(x) +\mathbbold{1}(k=j,l=i)(\frac{1}{x_ix_j}+x_ix_j) + \mathbbold{1}(k=i,l=j)2x_ix_j \bigg] w_k'\\
=& -2f(x) (\sum_{i} w_i')^2 + \sum_{(ij)} w_i'w_j'(\frac{1}{x_ix_j}+x_ix_j) + \sum_{i} {w_i'}^2 2x_ix_j \\
\le& \sum_{i} -2(f(x)-x_ix_j) {w_i'}^2 + \sum_{(ij)} w_i'w_j'(\frac{1}{x_ix_j}+x_ix_j-2f(x)) \\
=& \sum_{(i,j)} w_i'w_j'\bigg[\frac{1}{x_ix_j}+x_ix_j-2f(x)\bigg(1+(1-\frac{x_ix_j}{f(x)})(\frac{1}{q_i}\frac{w_i'}{w_j'}+\frac{1}{q_j}\frac{w_j'}{w_i'})\bigg)\bigg]
\end{flalign}
For any edge $(i,j)$, let $C=\frac{1}{q_i}\frac{w_i'}{w_j'}+\frac{1}{q_j}\frac{w_j'}{w_i'}$ (note $C \ge 2/\sqrt{q_iq_j}$), and
\begin{flalign}
g(x) = \frac{1}{x}+x-2f(x)\bigg(1+C(1-\frac{x}{f(x)})\bigg).
\end{flalign}
Due to the fact $x < f(x)$, we know $g(x) \rightarrow \infty$ as $x \rightarrow 0$, and $g(1)<0$. Since $g(x)$ is continuous over $(0,1)$, if we assume $x^*_{ij}$ is the largest root for $g(x)$ in $(0,1)$, we know $g(x) < 0$ in $(x^*_{ij},1)$. Let $x^*=\max_{(i,j)}x^*_{ij}$, we have 
\begin{flalign}
\sum_{i,k}w_i \frac{\partial^2 f}{\partial x_ix_k} w_k \le 0,
\end{flalign}
for $x \in [x^*,1)^n$.
\end{proof}

We define the set of \emph{pre-fixpoints} and \emph{post-fixpoints} messages similar as in \cite{koehler2019fast}:
\begin{flalign}
S_{\text{pre}} = \{\nu: x^* \le \phi(\nu)_{i \rightarrow jk} \le \nu_{i \rightarrow jk}\}, \quad S_{\text{post}} = \{\nu: x^* \le \nu_{i \rightarrow jk} \le \phi(\nu)_{i \rightarrow jk} \}
\end{flalign}
From Lemma~\ref{lem:BPMono}, we know $S_{\text{post}}$ is a convex set, while $S_{\text{pre}}$ is typically non-convex and even disconnected. Next, we show the gradient of the dual Bethe free energy is well-behaved on these sets:
\begin{lemma}\label{lem:Gradient}
If $\nu \in S_{\text{pre}}$ then $\nabla G^*_{\text{Bethe}}(\nu) \le 0$ and if $\nu \in S_{\text{post}}$ then $\nabla G^*_{\text{Bethe}}(\nu) \ge 0$
\end{lemma}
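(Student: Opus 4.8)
The plan is to compute $\partial G^*_{\text{Bethe}}/\partial \nu_{i\to jk}$ explicitly using the Corollary's formula for $G^*_{\text{Bethe}}(\nu)$, and show that the sign of this partial derivative is controlled by the sign of $\phi(\nu)_{i\to jk} - \nu_{i\to jk}$. The key observation is that the message $\nu_{i\to jk}$ appears in exactly two places: in the term $F_i(\nu)$ (through the factor $(1\pm\theta_{imn}\nu_{m\to in}\nu_{n\to im})$ summed over $\{m,n\}\in\partial i$ — wait, actually $\nu_{i\to jk}$ is an outgoing message from $i$, so it appears in $F_j$ and $F_k$, not $F_i$) and in the single triangle term $F_{ijk}(\nu)$. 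So first I would carefully track the dependence: $\nu_{i\to jk}$ enters $F_j(\nu)$ via the product over pairs attached to $j$ that includes $\{i,k\}$, enters $F_k(\nu)$ via the product over pairs attached to $k$ that includes $\{i,j\}$, and enters $F_{ijk}(\nu)$ directly as the factor $1+\theta_{ijk}\nu_{i\to jk}\nu_{j\to ik}\nu_{k\to ij}$.

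Actually, the cleaner route — and the one I expect the paper follows — is to differentiate the $\lambda'$-form of the dual free energy from the Lemma, $G^*_{\text{Bethe}}(\lambda) = \sum_i F_i(\lambda) - \sum_{(i,j,k)} F_{ijk}(\lambda)$, and use the fact that $F_i$ and $F_{ijk}$ are log-partition functions, so their derivatives are expectations (magnetizations) under the local beliefs $P_i$ and $P_{ijk}$. Then $\partial F_{ijk}/\partial \lambda'_{i\to jk} = \mathbb{E}_{P_{ijk}}[X_i]$ while the relevant derivative of the $F$'s where $\nu_{i\to jk}$ (equivalently $\lambda'_{i\to jk}$) appears produces, via the marginalization identity established just before the Lemma, exactly $\tanh$ of the "incoming-field" expression. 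So I would show $\partial G^*_{\text{Bethe}}/\partial\lambda'_{i\to jk}$ equals (up to a positive factor from the chain rule $d\nu/d\lambda' = 1-\nu^2 > 0$) a quantity of the form $\psi(\nu) - \nu_{i\to jk}$ where $\psi(\nu)$ is essentially $\phi(\nu)_{i\to jk}$ — because $F_{ijk}$ contributes the term $\tanh^{-1}(\theta_{ijk}\nu_{j\to ik}\nu_{k\to ij})$'s partner and the $F_j, F_k$ terms reconstruct the arctanh-sum defining $\phi$. The precise algebra: writing $F_{ijk}$ in the $\nu$ form, $\partial F_{ijk}/\partial\nu_{i\to jk} = \theta_{ijk}\nu_{j\to ik}\nu_{k\to ij}/(1+\theta_{ijk}\nu_{i\to jk}\nu_{j\to ik}\nu_{k\to ij})$, and the $F$-side contributions collect to give the matching structure; after simplification the gradient component is proportional to $\phi(\nu)_{i\to jk} - \nu_{i\to jk}$ (this is the analogue of the computation in \cite{koehler2019fast}).

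Given that identification, the conclusion is immediate: if $\nu\in S_{\text{pre}}$ then by definition $\phi(\nu)_{i\to jk}\le\nu_{i\to jk}$ for every hyper-edge, so every component of $\nabla G^*_{\text{Bethe}}(\nu)$ is $\le 0$; if $\nu\in S_{\text{post}}$ then $\phi(\nu)_{i\to jk}\ge\nu_{i\to jk}$, so every component is $\ge 0$. The positivity of the Jacobian factor $1-\nu_{i\to jk}^2$ (valid since $\nu_{i\to jk}\in[x^*,1)$) ensures the sign is not flipped when passing between the $\lambda'$ and $\nu$ parametrizations.

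I expect the main obstacle to be the bookkeeping in the gradient computation: one must correctly account for all appearances of a single outgoing message $\nu_{i\to jk}$ across the terms $F_j$, $F_k$, and $F_{ijk}$ in the sum, and verify that the cross terms combine exactly into the $\tanh(\cdots)$ expression defining $\phi$ rather than leaving a residual. The subtlety is that $F_i$ aggregates \emph{incoming} messages to $i$ while the variable being differentiated is \emph{outgoing} from $i$, so one has to be disciplined about which $F$-terms actually depend on $\nu_{i\to jk}$; getting this indexing right — and checking that the arctanh/tanh compositions telescope correctly using $\tanh^{-1}(\tanh a\cdot\tanh b\cdot\tanh c)$ identities — is where the real work lies, though each individual step is a routine derivative of a log-sum-exp.
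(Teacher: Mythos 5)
Your proposal takes the same route as the paper's proof: differentiate the dual Bethe free energy componentwise and read the sign of each component off the comparison between $\phi(\nu)$ and $\nu$. The paper's version of the crux you defer as ``routine algebra'' is the closed form
\[
\frac{\partial G^*_{\text{Bethe}}(\nu)}{\partial \nu_{j\rightarrow ik}}
= \frac{1}{\nu_{j\rightarrow ik} + 1/\big(\theta\, \nu_{k\rightarrow ij}\, \phi(\nu)_{i\rightarrow jk}\big)} - \frac{1}{\nu_{j\rightarrow ik}+ 1/\big(\theta\, \nu_{k\rightarrow ij}\,\nu_{i\rightarrow jk}\big)},
\]
a difference of one increasing function evaluated at $\phi(\nu)_{i\rightarrow jk}$ and at $\nu_{i\rightarrow jk}$ — exactly your ``positive factor times $\phi(\nu)_{i\rightarrow jk}-\nu_{i\rightarrow jk}$'' (note the component indexed by $j\rightarrow ik$ is controlled by the $\phi$-versus-$\nu$ comparison for the \emph{other} messages of the triangle, which is harmless since membership in $S_{\text{pre}}$ or $S_{\text{post}}$ constrains all components at once). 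The concluding step from this identity is identical to yours.

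One substantive remark on the bookkeeping you rightly flag as the real work: your accounting is correct that $\nu_{i\rightarrow jk}$ enters \emph{two} vertex terms ($F_j$ and $F_k$) plus the triangle term, whereas the paper's displayed gradient keeps only $\partial F_i/\partial\nu_{j\rightarrow ik}-\partial F_{ijk}/\partial\nu_{j\rightarrow ik}$ and silently drops $\partial F_k/\partial\nu_{j\rightarrow ik}$. If you execute your plan faithfully you will find this extra term. It is absorbed by observing that the consistent counting for the dual free energy is $\sum_i F_i - 2\sum_{(i,j,k)}F_{ijk}$ (each triangle's energy and entropy are covered three times by the vertex terms and each $\lambda'$ term twice, so a single subtraction does not reproduce $G_{\text{Bethe}}$); with that counting the gradient splits into two symmetric differences of the displayed form, one governed by $\phi(\nu)_{i\rightarrow jk}-\nu_{i\rightarrow jk}$ and one by $\phi(\nu)_{k\rightarrow ij}-\nu_{k\rightarrow ij}$, both of which have the required sign on $S_{\text{pre}}$ and $S_{\text{post}}$, so the lemma survives. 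In short: right approach, correctly diagnosed subtlety, but the derivative computation itself — which is the entire content of the lemma — is still owed.
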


\begin{proof}
The lemma will follow if we compute the gradient of the dual Bethe free energy function $G^*_{\text{Bethe}}(\nu)$.
\begin{flalign}\label{equ:Gradient}
&\frac{\partial G^*_{\text{Bethe}}(\nu)}{\partial \nu_{j\rightarrow ik}} 
= \frac{\partial F_i(\nu)}{\partial \nu_{j\rightarrow ik}} - \frac{\partial F_{ijk}(\nu)}{\partial \nu_{j\rightarrow ik}} \nonumber \\
=& \frac{e^{h_i}\theta \nu_{k\rightarrow ij} \prod_{m,n \in \partial i \backslash \{j,k\}}(1+\theta \nu_{m\rightarrow in}\nu_{n\rightarrow im})-e^{-h_i}\theta \nu_{k\rightarrow ij} \prod_{m,n \in \partial i \backslash \{j,k\}}(1-\theta \nu_{m\rightarrow in}\nu_{n\rightarrow im})}{e^{h_i}\prod_{m,n}(1+\theta v_{m \rightarrow in}\nu_{n \rightarrow im}) - e^{-h_i}\prod_{m,n}(1-\theta v_{m \rightarrow in}\nu_{n \rightarrow im})} \nonumber \\
&- \frac{\theta\nu_{i\rightarrow jk}\nu_{k\rightarrow ij}}{1+ \theta \nu_{i\rightarrow jk}\nu_{j\rightarrow ik}\nu_{k\rightarrow ij}} \nonumber\\
&= \frac{1}{\nu_{j\rightarrow ik} + 1/(\theta \nu_{k\rightarrow ij} \phi(\nu)_{i\rightarrow jk})} - \frac{1}{\nu_{j\rightarrow ik}+ 1/(\theta \nu_{k\rightarrow ij}\nu_{i\rightarrow jk})}.
\end{flalign}
Recall $\phi(\nu)_{i\rightarrow jk}$ is the updated message from spin $i$ to motif $\{j,k\}$ based on the current messages $\nu$. If $\nu \in S_{\text{pre}}$ or $S_{\text{post}}$, then the signs of the gradient of Bethe free energy are determined by Equation~(\ref{equ:Gradient}) as claimed.
\end{proof}
Based on Lemma~\ref{lem:BPMono} and \ref{lem:Gradient}, we can prove our main theorem.
\begin{theorem}
Suppose that generalized BP is run from initial messages $\nu_{i \rightarrow jk}^{(0)}=1$ and there is at least one fixed point in $[x^*,1)^n$. The messages converge to a fixed point $\nu^*$ of the generalized BP equations such that for any other fixed point $\mu$, $\mu_{i \rightarrow jk} \le \nu_{i \rightarrow jk}^*$. Furthermore
\begin{flalign}
G^*_{\text{Bethe}}(\nu^*) = \max_{\nu \in S_{\text{post}}} G^*_{\text{Bethe}}(\nu)
\end{flalign}
\end{theorem}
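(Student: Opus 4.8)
The plan is to first understand the generalized BP dynamics through the monotonicity of the operator $\phi$, extract from this an extremal fixed point $\nu^*$, and then transfer maximality to the dual Bethe free energy using the convexity of $S_{\text{post}}$ and the gradient estimate of Lemma~\ref{lem:Gradient}. For \emph{Step 1}, note that since $h_i \ge 0$ and $\theta_{imn} = \tanh(J_{imn}) \in [0,1)$, every coordinate of $\phi(\nu)$ is $\tanh$ of a finite nonnegative argument whenever $\nu \in [0,1]^n$, so $\phi$ is a continuous self-map of $[0,1]^n$ taking values in $[0,1)^n$; in particular $\nu^{(1)} = \phi(\mathbf 1) \le \mathbf 1 = \nu^{(0)}$ coordinatewise. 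The monotonicity half of Lemma~\ref{lem:BPMono} shows $\phi$ is coordinatewise nondecreasing, so by induction $\nu^{(t+1)} = \phi(\nu^{(t)}) \le \phi(\nu^{(t-1)}) = \nu^{(t)}$, i.e. the trajectory is nonincreasing. Letting $\mu \in [x^*,1)^n$ be the hypothesized fixed point, from $\mu \le \mathbf 1$ and monotonicity we get $\mu = \phi(\mu) \le \phi(\nu^{(0)}) = \nu^{(1)}$ and inductively $\mu \le \nu^{(t)}$ for all $t$. Hence $\mathbf 1 \ge \nu^{(1)} \ge \nu^{(2)} \ge \cdots \ge \mu \ge x^*$ is a bounded monotone sequence, converging to some $\nu^* \in [x^*,1)^n$ (it stays $< 1$ since $\nu^* \le \nu^{(1)} < 1$), and continuity of $\phi$ gives $\phi(\nu^*) = \nu^*$. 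Running the same domination argument from an arbitrary fixed point $\mu' \in [0,1)^n$ gives $\mu' \le \nu^{(t)} \to \nu^*$, so $\nu^*$ is the maximal fixed point, as claimed.

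For \emph{Step 2} I would show $\nu^*$ is actually the coordinatewise maximum of the whole set $S_{\text{post}}$, not merely of the fixed points. First, $\phi(\nu^*) = \nu^* \ge x^*$ gives $\nu^* \in S_{\text{post}}$, and by Lemma~\ref{lem:BPMono} each map $\nu \mapsto \phi(\nu)_{i\rightarrow jk}$ is concave on $[x^*,1)^n$, so $S_{\text{post}} = \{\nu \ge x^* : \nu_{i\rightarrow jk} \le \phi(\nu)_{i\rightarrow jk}\ \forall (i,j,k)\}$ is convex (as noted after the statement of Lemma~\ref{lem:BPMono}). Now take any $\nu \in S_{\text{post}}$; since $\nu \le \phi(\nu)$, monotonicity yields $\phi(\nu) \le \phi^{(2)}(\nu)$, and inductively $\phi^{(t)}(\nu)$ is nondecreasing and bounded above by $\mathbf 1$, hence converges to a point $\tilde\nu$ with $\phi(\tilde\nu) = \tilde\nu$ and $x^* \le \tilde\nu < 1$. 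By Step 1, $\tilde\nu \le \nu^*$, so $\nu \le \phi(\nu) \le \cdots \le \tilde\nu \le \nu^*$. Thus $\nu^*$ dominates every element of $S_{\text{post}}$.

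For \emph{Step 3}, fix any $\nu \in S_{\text{post}}$ and consider the segment $\gamma(t) = (1-t)\nu + t\,\nu^*$ for $t \in [0,1]$, which lies in $S_{\text{post}}$ by convexity. Its velocity $\gamma'(t) = \nu^* - \nu$ is coordinatewise nonnegative by Step 2, and $\nabla G^*_{\text{Bethe}} \ge 0$ on $S_{\text{post}}$ by Lemma~\ref{lem:Gradient}; therefore $\frac{d}{dt} G^*_{\text{Bethe}}(\gamma(t)) = \langle \nabla G^*_{\text{Bethe}}(\gamma(t)),\, \nu^* - \nu \rangle \ge 0$, whence $G^*_{\text{Bethe}}(\nu^*) = G^*_{\text{Bethe}}(\gamma(1)) \ge G^*_{\text{Bethe}}(\gamma(0)) = G^*_{\text{Bethe}}(\nu)$. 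Since $\nu^* \in S_{\text{post}}$, this gives $G^*_{\text{Bethe}}(\nu^*) = \max_{\nu \in S_{\text{post}}} G^*_{\text{Bethe}}(\nu)$, completing the proof.

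The main obstacle is not the monotone-convergence bookkeeping but two points it rests on: (i) verifying that $\phi$ is genuinely a coordinatewise-monotone, continuous self-map of the relevant box and that the limits of the monotone iterations remain strictly inside $[x^*,1)^n$, so that no $\tanh^{-1}$ argument ever reaches $1$ (where the estimates degenerate) — here the hypothesis that some fixed point lies in $[x^*,1)^n$ is what pins the iterates away from the boundary; and (ii) Step 2, i.e. promoting the maximality of $\nu^*$ from "largest fixed point" to "largest point of $S_{\text{post}}$", which is the step that actually couples the dynamics to the variational problem and which depends essentially on the concavity/convexity supplied by Lemma~\ref{lem:BPMono}.
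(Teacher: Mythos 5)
Your proposal is correct and follows essentially the same route as the paper: monotone iteration of $\phi$ from the all-ones initialization down to the maximal fixed point $\nu^*$, then convexity of $S_{\text{post}}$ together with Lemma~\ref{lem:Gradient} integrated along the segment from $\nu$ to $\nu^*$. Your Step~2 --- showing that $\nu^*$ dominates every point of $S_{\text{post}}$, not merely every fixed point, by running the increasing iteration $\phi^{(t)}(\nu)$ up to a fixed point lying below $\nu^*$ --- is a detail the paper's own proof silently relies on (the inequality $\nabla G^*_{\text{Bethe}}\cdot(\nu^*-\nu)\ge 0$ needs $\nu^*-\nu\ge 0$ coordinatewise, which the paper only establishes when $\nu$ is itself a fixed point), so making it explicit is a genuine improvement rather than a deviation.
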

\begin{proof}
If there is at least one fixed point in $[x^*,1)^n$, and the initialization is $\nu_{i \rightarrow jk}^{(0)}=1$ for all hyper-edges $\{i,j,k\}$. For each iteration of Belief Propagation, $\nu^{(t)}=\phi(\nu^{(t-1)})$. 

From Lemma \ref{lem:BPMono}, we know $\phi$ is monotonic on $[x^*,1)^n$. So, $\nu^{(0)}, \nu^{(1)}, \ldots, \nu^{(t)}$ is a coordinate-wise decreasing sequence, which will converge to some fixed point. By monotonicity, we see that for any fixed point $\mu \in [x^*,1)^n$, $\mu_{i \rightarrow jk} \le \nu_{i \rightarrow jk}^{(t)}$ for all $t$. Hence, it holds for $\nu^*$ as well.

Finally, consider any other point $\nu \in S_{\text{post}}$, by convexity of $S_{\text{post}}$, we know that the line segment from $\nu$ to $\nu^*$ is entirely contained in $S_{\text{post}}$. By Lemma \ref{lem:Gradient}, we see that for any point $x$ on this interpolating line segment that $\nabla G_{\text{Bethe}}(\nu) \cdot (\nu^* -\nu ) \ge 0$, and integrating over this line segment gives us $G_{\text{Bethe}}(\nu) \le G_{\text{Bethe}}(\nu^*)$.
\end{proof}



\bibliographystyle{abbrvnat}

\end{document}